\def\indot<#1>{\langle #1 \rangle}
\author{\name Daiki Suehiro
 \email{suehiro@ait.kyushu-u.ac.jp} \\
 \addr Faculty of Information Science and Electrical Engineering, Kyushu University\\
and AIP, RIKEN\\
 Fukuoka, Japan, 8190395
\AND
\name Kohei Hatano
 \email{hatano@inf.kyushu-u.ac.jp}\\
 \addr Research and Development Division, Library,
 Kyushu University \\
and AIP, RIKEN\\
 Fukuoka, Japan, 8190395
\AND
 \name Eiji Takimoto
 \email{eiji@inf.kyushu-u.ac.jp}\\
 \addr Department of Informatics,  Kyushu University\\
 Fukuoka, Japan, 8190395
\AND
\name Shuji Yamamoto
 \email{yamashu@math.keio.ac.jp}
\AND
\name Kenichi Bannai
 \email{bannai@math.keio.ac.jp}\\
 \addr Department of Mathematics, Keio University\\
and AIP, RIKEN\\
 Kanagawa, Japan, 2238522
\AND
\name Akiko Takeda
 \email{atakeda@ism.ac.jp}\\
 \addr Department of Mathematical Analysis and Statistical Inference,\\ 
The Institute of Statistical Mathematics\\ and AIP, RIKEN\\
 Tokyo, Japan, 1908562
}
\begin{document}
\title{Boosting the kernelized shapelets: \\Theory and algorithms for local features}

\newtheorem{defi}{Definition}
\newtheorem{theo}{Theorem}
\newtheorem{prop}[theo]{Proposition}
\newtheorem{coro}[theo]{Corollary}
\newtheorem{lemm}{Lemma}
\newtheorem{fact}{Fact}
\newtheorem{ex}{Example}
\def\remark{\par\noindent\hangindent0pt{\bf Remark.}~}


\def\OMIT#1{}
\def\newwd#1{{\em #1}}

\newcommand{\mnote}[1]{\marginpar{#1}}
\newcommand{\mynote}[1]{{\bf {#1}}}

%
%

\newcounter{nombre}
\renewcommand{\thenombre}{\arabic{nombre}}
\setcounter{nombre}{0}
\newenvironment{OP}[1][]{\refstepcounter{nombre}\par\bigskip \abovedisplayskip=0.5\abovedisplayskip \noindent{\sf OP \thenombre : #1}}{}

\global\long\def\T#1{#1^{\top}}

\newcommand{\shapelets}{shapelets}
\newcommand{\shapelet}{shapelet}
\newcommand{\LPS}[0]{LPM\xspace}
\newcommand{\targetH}{{local pattern matching hypothesis}}
\newcommand{\Nh}{N_{\mathrm{high}}}
\newcommand{\fsvm}{f_{\mathrm{SVM}}}
\newcommand{\frsvm}{f_{\mathrm{RSVM}}}
\newcommand{\OPTH}{\mathrm{OPT}_\textrm{hard}}
\newcommand{\OPTS}{\mathrm{OPT}_\textrm{soft}}
\newcommand{\OPTSNU}{\mathrm{OPT}_\mathrm{soft}(\nu^+)}
\newenvironment{claim}{\begin{trivlist}\item[]\textit{Claim}}{\end{trivlist}}
\newcommand{\AUC}{\mathrm{AUC}}
\newcommand{\bphi}{{\bf \phi}}
\newcommand{\bx}{{\bf x}}
\newcommand{\bsigma}{\boldsymbol{\sigma}}
\newcommand{\blambda}{\boldsymbol{\lambda}}
\newcommand{\kernel}{\boldsymbol{\mathrm{K}}}
\newcommand{\Vmat}{\boldsymbol{\mathrm{V}}}
\newcommand{\Xmat}{\boldsymbol{\mathrm{X}}}
\newcommand{\bw}{{\bf w}}
\newcommand{\bd}{{\bf d}}
\newcommand{\bv}{{\bf v}}
\newcommand{\bu}{{\bf u}}
\newcommand{\bz}{{\bf z}}
\newcommand{\bs}{{\bf s}}
\newcommand{\bt}{{\bf t}}
\newcommand{\balpha}{\boldsymbol{\alpha}}
\newcommand{\bbeta}{\boldsymbol{\beta}}
\newcommand{\bgamma}{\boldsymbol{\gamma}}
\newcommand{\bxi}{\boldsymbol{\xi}}
\newcommand{\sbsq}{\mathrm{sub}}
\newcommand{\edge}{\mathrm{edge}}
\newcommand{\Ksub}{K_{\mathrm{sub}}}
\newcommand{\Ourmethod}[0]{our method\xspace}
\newcommand{\hsigma}{\widehat\sigma}
\newcommand{\convhull}{\mathcal{H}}
\newcommand{\err}{\mathrm{err}}
\newcommand{\RW}{\mathrm{RW}}
\newcommand{\RCS}{\mathrm{RCS}}
\newcommand{\RV}{\mathrm{RV}}
\newcommand{\SRS}{\mathrm{SRS}}
\newcommand{\RSG}{\mathrm{RSG}}
\newcommand{\sign}{\mathrm{sign}}
\newcommand{\dom}{\mathcal{X}} 
\newcommand{\domp}{\mathcal{X}^{pos}} 
\newcommand{\domn}{\mathcal{X}^{neg}} %
\newcommand{\range}{\mathcal{Y}} 
\newcommand{\Natural}{\mathbb{N}} %
\newcommand{\Real}{\mathbb{R}} 
\newcommand{\Hilbert}{\mathbb{H}} 
\newcommand{\Prob}{\mathcal{P}}
\newcommand{\F}{\mathcal{F}}
\newcommand{\calS}{\mathcal{S}}
\newcommand{\calF}{\mathcal{F}}
\newcommand{\calG}{\mathcal{G}}
\newcommand{\calT}{\mathcal{T}}
\newcommand{\calY}{\mathcal{Y}}
\newcommand{\Hyp}{\mathcal{H}}
\newcommand{\WH}{\mathcal{W}}
\newcommand{\EX}{\mathrm{EX}}
\newcommand{\filtEX}{\mathrm{FiltEX}}
\newcommand{\HSelect}{\mathrm{HSelect}}
\newcommand{\WL}{\mathrm{WL}}
\newcommand{\vecx}{\boldsymbol{x}}
\newcommand{\vecy}{\mbox{\boldmath $y$}}
\newcommand{\vecw}{\boldsymbol{w}}
\newcommand{\vecz}{\boldsymbol{z}}
\newcommand{\vecg}{\mbox{\boldmath $g$}}
\newcommand{\veca}{\mbox{\boldmath $a$}}
\newcommand{\vecd}{\boldsymbol{d}}
\newcommand{\vecell}{\mbox{\boldmath $\ell$}}
\newcommand{\vecsigma}{\boldsymbol{\sigma}}
\newcommand{\vecpi}{\boldsymbol{\pi}}
\newcommand{\vecxi}{\boldsymbol{\xi}}
\newcommand{\vece}{\mbox{\boldmath $e$}}
\newcommand{\vecB}{\mbox{\boldmath $B$}}
\newcommand{\vecD}{\mbox{\boldmath $D$}}
\newcommand{\vecI}{\mbox{\boldmath $I$}}
\newcommand{\tr}{\mathrm{tr}}
\newcommand{\vecG}{\mbox{\boldmath $G$}}
\newcommand{\vecF}{\mbox{\boldmath $F$}}
\newcommand{\tvecu}{\tilde{\mbox{\boldmath $u$}}}
\newcommand{\tvecw}{\tilde{\mbox{\boldmath $w$}}}
\newcommand{\tvecx}{\tilde{\mbox{\boldmath $x$}}}
\newcommand{\tw}{\tilde{w}}
\newcommand{\tx}{\tilde{x}}
\newcommand{\haty}{\hat{y}}
\newcommand{\vecf}{\mbox{\boldmath $f$}}
\newcommand{\vectheta}{\mbox{\boldmath $\theta$}}
\newcommand{\vecalpha}{\boldsymbol{\alpha}}
\newcommand{\vecbeta}{\mbox{\boldmath $\beta$}}
\newcommand{\vectildealpha}{\widetilde{\vecalpha}}
\newcommand{\vectildebeta}{\widetilde{\vecbeta}}
\newcommand{\tildealpha}{\widetilde{\alpha}}
\newcommand{\tildebeta}{\widetilde{\beta}}
\newcommand{\vechatalpha}{\widehat{\vecalpha}}
\newcommand{\vechatbeta}{\widehat{\vecbeta}}
\newcommand{\hatalpha}{\widehat{\alpha}}
\newcommand{\hatbeta}{\widehat{\beta}}
\newcommand{\vectau}{\mbox{\boldmath $\tau$}}
\newcommand{\veclambda}{\bm{\lambda}}
\newcommand{\vecu}{\mbox{\boldmath $u$}}
\newcommand{\vecv}{\mbox{\boldmath $v$}}
\newcommand{\vecp}{\boldsymbol{p}}
\newcommand{\vecq}{\mbox{\boldmath $q$}}
\newcommand{\vecr}{\boldsymbol{r}}
\newcommand{\vecc}{\boldsymbol{c}}
\newcommand{\fp}{\mathrm{fp}}
\newcommand{\fn}{\mathrm{fn}}
\newcommand{\ouralg}{{Our algorithm}~}
\newcommand{\Ouralg}{PUMMA~}

\newcommand{\bn}{\Delta_2} 
\newcommand{\psimp}{\mathcal{P}} %
\newcommand{\hatgamma}{\hat{\gamma}}
\newcommand{\indctr}[1]{I(#1)}
\newcommand{\CLASS}{\mathcal{C}}
\newcommand{\VC}{\mathrm{VC}}

\newcommand{\reg}{\mathcal{R}}
\newcommand{\breg}{D}

\newcommand{\filtex}{\mathrm{GenD_t}}
\newcommand{\gensamp}{\mathrm{GenSample}}

\newcommand{\argmax}{\mathop{\rm arg~max}\limits}
\newcommand{\argmin}{\mathop{\rm arg~min}\limits}
\newcommand{\Expo}{\mathop{\rm  E}\limits}

\newcommand{\half}{\frac{1}{2}}
\newcommand{\eps}{\varepsilon}

\newcommand{\hp}{\hat{p}}
\newcommand{\hmup}{\hat{\mu}[+]}
\newcommand{\hmun}{\hat{\mu}[-]}
\newcommand{\hgp}{\hat{\gamma}[+]}
\newcommand{\hgn}{\hat{\gamma}[-]}
\newcommand{\gain}{\Delta}
\newcommand{\hgain}{\hat{\Delta}}
\newcommand{\vecdelta}{\boldsymbol{\delta}}

\newcommand{\tildeO}{\Tilde{O}}
\newcommand{\permset}{S}
\newcommand{\base}{\boldsymbol{B}}
\newcommand{\calC}{\mathcal{C}}
\newcommand{\calP}{\mathcal{P}}
\newcommand{\calX}{\mathcal{X}}
\newcommand{\calE}{\mathcal{E}}
\newcommand{\Rdm}{\mathfrak{R}}
\newcommand{\GC}{\mathfrak{G}}
\newcommand{\hullC}{\mathrm{conv}(\calC)}
\newcommand{\hullH}{\mathrm{conv}(H)}
\newcommand{\conv}{\mathrm{conv}}

\newcommand{\Rmin}{R_{\mathrm{min}}}
\newcommand{\Rmax}{R_{\mathrm{max}}}

\def\ceil#1{%
\left\lceil #1 \right\rceil}

\def\defeq{%
\stackrel{\mathrm{def}}{=}}

\def\floor#1{%
\lfloor #1 \rfloor}

\def\myhang{%
    \par\noindent\hangindent20pt\hskip20pt}
\def\nitem#1{%
    \par\noindent\hangindent40pt
    \hskip40pt\llap{#1~}}

\newcommand{\E}{\boldsymbol{E}}

\newcommand{\pdiff}{\Phi_{\mathrm{diff}}(\calP_S)}
\newcommand{\calI}{\mathcal{I}}
\maketitle

\begin{abstract}
We consider binary classification problems using 
local features of objects.  One of motivating applications is
 time-series classification, where features reflecting some local
 closeness measure between
a time series and a pattern sequence called shapelet are useful.
 Despite the empirical success of such approaches using local features,
 the generalization ability of resulting hypotheses is not fully
 understood and previous work relies on a bunch of heuristics. 
 In this paper, we formulate a class of hypotheses using local features,
 where the richness of features is controlled by kernels.
 We derive generalization bounds of sparse ensembles over the class
 which is exponentially better than a standard analysis in terms of the number of possible local features. 
The resulting optimization problem is well suited to the boosting
 approach and the weak learning problem is formulated as a DC program,
 for which practical algorithms exist.  
In preliminary experiments on time-series data sets, our method
achieves competitive accuracy with the state-of-the-art algorithms
with small parameter-tuning cost.
\end{abstract}
\section{Introduction}
Classifying objects using their ``local'' patterns is often effective in
various applications.
For example, in time-series classification problems, 
a local feature called \textit{shapelet} is shown to be quite powerful
in the data mining literature~\citep{Ye:2009:TSS:1557019.1557122,
  KeoghR13, Hills:2014:CTS:2597434.2597448,
  Grabocka:2014:LTS:2623330.2623613}.
More precisely, a shapelet $\bz=(z_1,\dots,z_\ell)$ is a real-valued 
``short'' sequence in $\Real^\ell$ for some $\ell>1$.
 Given a time-series $\bx=(x_1,\dots,x_L)$,
 a typical measure of closeness between the time-series $\bx$ and the
 shapelet $\bz$ is $\min_{j=1}^Q \|\bx_{j:j+\ell-1}-\bz\|_2$, where
 $Q=L-\ell +1$ and $\bx_{j:j+\ell-1}=(x_{j},\dots,x_{j+\ell-1})$.
 Here, the measure focuses on ``local'' closeness between the
 time-series and the shapelet.
 In many time-series classification problems, sparse combinations of
 features based on the closeness to some shapelets are
 useful~\citep{GrabockaWS15, renard:hal-01217435, HouKZ16}. 
 Similar situations could happen in other applications. Say,
 for image classification problems, {\it template matching} is a well-known
 technique to measure similarity between an image and  ``(small)
 template image'' in a local sense.

 Despite the empirical success of applying local features, theoretical
 guarantees of such approaches are not fully investigated.
 In particular, trade-offs of the richness of such local features and
 the generalization ability are not characterized yet. 
 
In this paper, we formalize a class of hypotheses based on some local
closeness. Here, the richness of the class is controlled by associated
kernels. We show generalization bounds of ensembles of such local
classifiers. Our bounds are exponentially tighter in terms of some
parameter  than typical bounds obtained by a standard analysis.

Further, for learning ensembles of the kernelized hypotheses,
our theoretical analysis suggests a $1$-norm constrained
optimization problem with infinitely many parameters, for which the dual
problem is categorized as a semi-infinite program~\citep[see][]{Shapiro09}.
To obtain approximate solutions of the problem efficiently,
we take the approach of boosting~\citep{schapire-etal:as98}. In particular, we employ
LPBoost~\citep{demiriz-etal:ml02}, which solve $1$-norm constrained soft margin
optimization via a column generation approach.
 As a result, our approach has two stages, where the master problem is a
 linear program and the sub-problems are difference of convex programs
 (DC programs),
 which are non-convex.
 While it is difficult to solve the sub-problems exactly due to
 non-convexity, various techniques are investigated for DC programs and
 we can find good approximate solutions efficiently for many cases in practice.

 In preliminary experiments on time-series data sets,
our method achieves competitive accuracy with the state-of-the-art
algorithms using shapelets.
While the previous algorithms need careful parameter tuning and heuristics,
our method uses less parameter tuning and parameters can be determined
in an organized way.
In addition, our solutions tend to be sparse and could be useful for
domain experts to select good local features.

\subsection{Related work}
The concept of time-series shapelets was first introduced 
by~\citet{Ye:2009:TSS:1557019.1557122}.
The algorithm finds shapelets by using the information gains
of potential candidates associated with all the subsequences of the
given time series and constructs a decision tree.
{\it Shapelet transform}~\citep{Hills:2014:CTS:2597434.2597448} 
is a technique combining with shapelets and machine learning.
The authors consider the time-series examples as feature vectors
defined by the set of local closeness to some shapelets 
and in order to obtain classification rule, they employed 
some effective learning algorithms such as linear SVM or random forests.
Note that shapelet transform 
completely separate the phase searching for shapelets 
from the phases of creating classification rules.
Afterward, 
many algorithms have been proposed
to search the good shapelets efficiently
keeping high prediction accuracy
in practice~\citep{KeoghR13, GrabockaWS15, 
renard:hal-01217435, Karlsson:2016}.
The algorithms are based on the idea that 
discriminative shapelets
are contained in the training data. 
This approach, however, might overfit without a regularization.
{\it Learning Time-Series Shapelets} (LTS)
algorithm \citep{Grabocka:2014:LTS:2623330.2623613} is a 
different approach from such subsequence-based algorithms.
LTS approximately solves an optimization problem of learning
the best shapelets directly without searching subsequences in a
brute-force way.
In contrast to the above subsequence-based methods, 
LTS finds nearly optimal shapelets and achieves 
higher prediction accuracy than the other existing methods in
practice.
However, there is no theoretical guarantee of its generalization error.

There are previous results using 
local features based on kernels ~\citep[e.g.,][]{OdoneBV05, harris54,
  Shimodaira:2001, 5597650}.
However, their approaches focus on the closeness (similarity) between 
examples (e.g., subsequence $a$ of example $A$ and subsequence $b$ of
example $B$), not known to capture local similarity.
\def\calP{\mathcal{P}}
\section{Preliminaries}
\label{sec:prelim}
Let $\calP \subseteq \Real^\ell$ be a set, in which an element is called a pattern.
Our instance space $\calX$ is a set of sequences of 
patterns in $\calP$.
For simplicity, we assume that every sequence in $\calX$ is of
the same length.
That is, $\calX \subseteq \calP^Q$ for some integer $Q$.
We denote by $\bx = (\bx^{(1)}, \ldots, \bx^{(Q)})$ an instance
sequence in $\calX$,
where every $\bx^{(j)}$ is a pattern in $\calP$.
The learner receives a labeled sample
$S = (((\bx_1^{(1)}, \ldots, \bx_1^{(Q)}), y_1), \ldots, 
((\bx_m^{(1)}, \ldots, \bx_m^{(Q)}), y_m)) \in
(\calX \times \{-1, 1\})^m$ of size $m$, where 
each labeled instance is independently drawn according to some unknown
distribution $D$ over $\calX \times \{-1,+1\}$.

Let $K$ be a kernel over $\calP$, which is used to measure the
similarity between patterns, and let
$\Phi: \calP \to \Hilbert$ denote a feature map associated with
the kernel $K$ for a Hilbert space $\Hilbert$.
That is,
$K(\bz, \bz')=\langle \Phi(\bz), \Phi(\bz')\rangle$
for patterns $\bz, \bz' \in \calP$, where
$\langle \cdot,\cdot \rangle$ denotes the inner product over $\Hilbert$.
The norm induced by the inner product is denoted by
$\|\cdot\|_\Hilbert$ and satisfies $\|\bu\|_{\Hilbert} =
\sqrt{\langle \bu, \bu \rangle}$ for $\bu \in \Hilbert$. 

For each $\bu \in \Hilbert$,
we define the {\it base classifier} (or the {\it feature}),
denoted by $h_{\bu}$, 
as the function that maps a given sequence
$\bx = (\bx^{(1)},\ldots,\bx^{(Q)}) \in \calX$ to the maximum of
the similarity scores between $\bu$ and $\bx^{(j)}$
over all patterns $\bx^{(j)}$ in $\bx$.
More specifically,
\[
	h_{\bu}(\bx) = \max_{j \in [Q]}
		\left\langle\bu, \Phi(\bx^{(j)}) \right\rangle,
\]
where $[Q]$ denotes the set $\{1,2,\ldots,Q\}$.
For a set $U \subseteq \Hilbert$, we define the class of base classifiers
as 
\[
	H_U = \left\{ h_{\bu} \mid \bu \in U \right\}
\]
and we denote by $\conv(H_U)$ the set of convex combinations of
base classifiers in $H_U$. More precisely,
\[
	 \conv(H_U) = \left\{
		\sum_{\bu \in U'} w_\bu h_\bu(\bx) \mid
		\forall \bu \in U', w_\bu \geq 0,
		\sum_{\bu \in U'} w_\bu = 1,
		\text{$U' \subseteq U$ is a finite support} 
 	\right\}.
\]
The goal of the learner is to find a final hypothesis
$g \in \conv(H_U)$, so that its generalization error
$\calE_D(g) = \Pr_{(\bx,y) \sim D}[\sign(g(\bx)) \neq y]$
is small.

\paragraph{Example: Learning with time-series shapelets}
For a typical setting of learning with time-series shapelets,
an instance is a sequence of real numbers 
$\bx = (x_1,x_2,\ldots,x_L) \in \Real^L$ and a base classifier
$h_\bs$, which is associated with a shapelet
(i.e., a ``short'' sequence of real numbers)
$\bs = (s_1,s_2,\ldots,s_\ell) \in \Real^\ell$, is defined as
\[
	h_\bs(\bx) = \max_{1 \leq j \leq L-\ell+1}
	K(\bs, \bx^{(j)}),
\]
where $\bx^{(j)} = (x_j,x_{j+1},\ldots,x_{j+\ell})$
is the subsequence of $\bx$ of length $\ell$ that begins
with $j$th index\footnote{
In previous work, $K$ is not necessarily a kernel. For instance,
the negative of Euclidean distance
$-\|\bs - \bx^{(j)}\|$ is often used as a similarity measure $K$.}.
In our framework, this corresponds to the case where
$Q = L-\ell+1$, $\calP \subseteq \Real^\ell$, and
$U = \{\Phi(\bs) \mid \bs \in \calP\}$.

\section{Risk bounds of the hypothesis classes}
\label{sec:theorem1}
In this section, we give generalization bounds of
the hypothesis classes $\conv(H_U)$ for various $U$ and $K$.
To derive the bounds, we use the Rademacher and the Gaussian
complexity~\citep{Bartlett:2003:RGC}.
\begin{defi}{~\citep[The Rademacher and the Gaussian complexity,][]{Bartlett:2003:RGC}}
Given a sample $S=(\bx_1,\dots,\bx_m) \in \calX^m$, 
the empirical Rademacher complexity $\Rdm(H)$ of a class $H \subset
 \{h: \calX \to \Real\}$ w.r.t.~$S$
 is defined as 
 $\Rdm_S(H)=\frac{1}{m}\Expo_{\vecsigma}\left[
\sup_{h \in H}\sum_{i=1}^m \sigma_i h(\bx_i)
 \right]$,
 where $\vecsigma \in \{-1,1\}^m$ and each $\sigma_i$ is an independent
 uniform random variable in $\{-1,1\}$.
 The empirical Gaussian complexity $\GC_S(H)$ of $H$ w.r.t.~$S$
 is defined similarly but
 each $\sigma_i$ is drawn independently from the standard normal distribution.
\end{defi}
The following bounds are well-known. 
\begin{lemm}{\citep[Lemma 4]{Bartlett:2003:RGC}}
 \label{lemm:RC_and_GC}
 $\Rdm_S(H) =O(\GC_S(H))$.
\end{lemm}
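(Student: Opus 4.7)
The plan is to establish the one-sided bound $\Rdm_S(H) \leq \sqrt{\pi/2}\cdot \GC_S(H)$, which immediately yields $\Rdm_S(H) = O(\GC_S(H))$. The key tool is the decomposition $\sigma_i = |\sigma_i|\cdot\epsilon_i$ of a standard normal $\sigma_i$, where $\epsilon_i = \sign(\sigma_i)$ is a Rademacher random variable (by symmetry of the Gaussian), $|\sigma_i|$ is its absolute value, the two are independent, and $\Expo[|\sigma_i|] = \sqrt{2/\pi}$.

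First I would rewrite the Gaussian complexity as $m\GC_S(H) = \Expo_{|\sigma|,\epsilon}[\sup_{h\in H}\sum_i |\sigma_i|\epsilon_i h(\bx_i)]$. For every fixed $\epsilon$, the map $(a_1,\ldots,a_m)\mapsto \sup_{h\in H}\sum_i a_i \epsilon_i h(\bx_i)$ is convex in $(a_1,\ldots,a_m)$, since it is a supremum of linear functions. Conditioning on $\epsilon$ and applying Jensen's inequality with respect to $|\sigma|$ (which is independent of $\epsilon$) then gives
\[
\Expo_{|\sigma|}\left[\sup_{h\in H}\sum_i |\sigma_i|\epsilon_i h(\bx_i)\right] \;\geq\; \sup_{h\in H}\sum_i \Expo[|\sigma_i|]\,\epsilon_i h(\bx_i) \;=\; \sqrt{2/\pi}\,\sup_{h\in H}\sum_i \epsilon_i h(\bx_i).
\]
Taking the outer expectation over $\epsilon$ yields $\GC_S(H) \geq \sqrt{2/\pi}\,\Rdm_S(H)$, which rearranges to the claim.

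The only delicate point is getting the direction of Jensen's inequality right: the supremum must be treated as a convex (not concave) functional, so that $\Expo[\sup(\cdot)] \geq \sup(\Expo[\cdot])$ holds in the direction needed to push the mean $\sqrt{2/\pi}$ through the supremum. Independence of $|\sigma_i|$ from $\epsilon_i$, which in turn rests on the symmetry of the Gaussian, is essential for the conditioning step to produce the Rademacher complexity cleanly on the right-hand side; without this factorization, replacing $|\sigma_i|$ by its mean inside the supremum would not be valid.
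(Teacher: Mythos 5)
Your argument is correct: the factorization $\sigma_i = |\sigma_i|\,\epsilon_i$ with $|\sigma_i|$ independent of $\epsilon_i$ (valid by Gaussian symmetry), followed by Jensen's inequality applied to the convex map $(a_1,\dots,a_m)\mapsto\sup_{h\in H}\sum_i a_i\epsilon_i h(\bx_i)$ conditionally on $\epsilon$, gives $\GC_S(H)\geq\sqrt{2/\pi}\,\Rdm_S(H)$ with the right direction of the inequality and the correct constant. The paper does not prove this lemma but only cites Lemma 4 of Bartlett and Mendelson, and your proof is precisely the standard argument underlying that cited result, so there is nothing to reconcile.
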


\begin{lemm}{\citep[Corollary 6.1]{Mohri.et.al_FML}} 
\label{lemm:ensemble_margin_bound}
For fixed $\rho$, $\delta >0$, 
the following bound holds with probability at least $1- \delta$:
for all $f \in \hullH$,
\[
\calE_D(f) \leq \calE_{S,\rho}(f) + \frac{2}{\rho} \Rdm_S(H) + 3 \sqrt{\frac{\log\frac{1}{\delta}}{2m}},
\]
 where $\calE_{S, \rho}(f)$ is the \textit{empirical margin loss} of $f$
 over $S$,
 i.e., the ratio of examples for which $f$ has margin $y_i f(\bx_i)<\rho$.
\end{lemm}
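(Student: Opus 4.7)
The plan is to derive this margin bound by sandwiching the $0/1$ loss between two surrogate losses, applying a standard Rademacher-based uniform convergence inequality to the surrogate, then stripping away the surrogate and the convex hull using contraction and a convex-hull identity for $\Rdm_S$.

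First I would introduce the ramp/margin loss $\Phi_\rho:\Real \to [0,1]$ defined by $\Phi_\rho(u)=1$ for $u\le 0$, $\Phi_\rho(u)=1-u/\rho$ for $0\le u\le \rho$, and $\Phi_\rho(u)=0$ for $u\ge \rho$. Two properties matter: (i) $I[u\le 0]\le \Phi_\rho(u)\le I[u\le \rho]$, and (ii) $\Phi_\rho$ is $(1/\rho)$-Lipschitz with $\Phi_\rho(0)=1$. Property (i) implies $\calE_D(f)=\Pr[yf(\bx)\le 0]\le \E_D[\Phi_\rho(yf(\bx))]$ and $\E_S[\Phi_\rho(yf(\bx))]\le \calE_{S,\rho}(f)$, so it is enough to control the surrogate gap $\E_D[\Phi_\rho(yf)]-\E_S[\Phi_\rho(yf)]$ uniformly over $f\in\hullH$.

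Next I would invoke the standard Rademacher generalization bound for $[0,1]$-valued function classes (McDiarmid + symmetrization): with probability at least $1-\delta$, simultaneously for all $f\in\hullH$,
\[
\E_D[\Phi_\rho(yf(\bx))]\le \E_S[\Phi_\rho(yf(\bx))]+2\,\Rdm_S(\Phi_\rho\circ \calG)+3\sqrt{\tfrac{\log(1/\delta)}{2m}},
\]
where $\calG=\{(\bx,y)\mapsto y f(\bx):f\in\hullH\}$. Then I apply Talagrand's contraction lemma to the $(1/\rho)$-Lipschitz map $\Phi_\rho$ to get $\Rdm_S(\Phi_\rho\circ \calG)\le (1/\rho)\Rdm_S(\calG)$. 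Because $\sigma_i y_i$ has the same distribution as $\sigma_i$, the labels can be absorbed, giving $\Rdm_S(\calG)=\Rdm_S(\hullH)$.

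Finally I would remove the convex hull using the identity $\Rdm_S(\hullH)=\Rdm_S(H)$: the supremum of $\sum_i\sigma_i\sum_{\bu}w_\bu h_\bu(\bx_i)$ over the simplex is attained at a vertex, so the two complexities coincide. Chaining the inequalities yields $\calE_D(f)\le \calE_{S,\rho}(f)+(2/\rho)\Rdm_S(H)+3\sqrt{\log(1/\delta)/(2m)}$. The only nontrivial ingredient is Talagrand's contraction lemma; I expect that to be the main "obstacle," but it is a well-established off-the-shelf tool and the rest of the argument is routine.
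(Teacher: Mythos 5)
Your proof is correct and follows essentially the same route as the result the paper relies on: Lemma~\ref{lemm:ensemble_margin_bound} is imported verbatim from \citet[Corollary 6.1]{Mohri.et.al_FML} without a proof in the paper, and the cited proof is exactly your argument (ramp-loss sandwich of the $0/1$ loss, the standard Rademacher uniform-convergence bound, Talagrand's contraction lemma, and the identity $\Rdm_S(\conv(H))=\Rdm_S(H)$ from linearity over the simplex). No gaps to report.
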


To derive generalization bounds based on the Rademacher or the Gaussian
complexity is quite standard in the statistical learning theory
literature and applicable to our classes of interest as well.
However, a standard analysis provides us sub-optimal bounds. 

For example, let us consider the simple case where the class $H_U$ of
base classifiers is defined by the linear kernel with $U$ to be
the set of vectors in $\Real^\ell$ of bounded norm.
In this case, $H_U$ can be viewed as 
$H_U = \{\max\{h_1,\dots,h_Q\} \mid h_1 \in H_1, \ldots, h_Q \in H_Q \}$,
where
$H_j = \{h: \bx \mapsto \langle \bu, \bx^{(j)} \rangle \mid \bu \in U\}$
for every $j \in [Q]$.
Then, by a standard analysis ~\citep[see, e.g.,][Lemma 8.1]{Mohri.et.al_FML}, 
we have $\Rdm(H_U) \leq \sum_{j=1}^Q \Rdm(H_j) =O \left(\frac{Q}{\sqrt{m}} \right)$.
However, this bound is weak since it is linear in $Q$.
In the following, we will give an improved bound of
$\tilde{O}(\log Q/\sqrt{m})$.
The key observation is that if base classes $H_1,\dots,H_Q$ are
``correlated'' somehow, one could obtain better Rademacher bounds.
 In fact, we will exploit some geometric properties among these base classes.

\subsection{Main theorem}
First, we show our main theoretical result on the generalization 
bound of $\conv(H_U)$.

Given a sample $S$, let $\calP_S$ be the set of patterns appearing in
$S$ and 
let $\Phi(\calP_S)=\{\Phi(\bz) \mid \bz \in \calP_S\}$.
Let $\pdiff=\{\Phi(\bz) - \Phi(\bz') \mid
\bz,\bz'\in \calP_S, \bz \neq \bz'\}$.
By viewing each instance $\bv \in \pdiff$ as a hyperplane
$\{\bu \mid \langle \bv, \bu \rangle = 0\}$,
we can naturally define a partition
of the Hilbert space $\Hilbert$
by the set of all hyperplanes $\bv \in \pdiff$.
Let $\calI$ be the set of all cells of the partition.
Each cell $I \in \calI$ is a polyhedron which is defined by
a minimal set $V_I \subseteq \pdiff$ that
satisfies $I = \bigcap_{\bv \in V_I}
\{\bu \mid \langle \bu, \bv \rangle \geq 0\}$.
Let
\[
	\mu^* = \min_{I \in \calI}\max_{\bu\in I \cap U}\min_{\bv \in V_I}
 |\langle \bu, \bv \rangle|. 
\]
Let $d^*_{\Phi,S}$ be the VC dimension of the set of linear classifiers
over the finite set $\pdiff$, given by
$F_U=\{ f: \bv \mapsto \sign(\langle \bu, \bv\rangle) \mid \bu \in U\}$. 
Then, our main result is stated as follows:

\begin{theo}
\label{theo:main}
Suppose that for any $\bz \in \calP$, $\|\Phi(\bz)\|_\Hilbert \leq R$.
Then, for any $\rho>0$ and $\delta (0 <\delta <1)$, 
with probability at least $1-\delta$, 
the following holds
for any $g \in \conv(H_U)$ with
$U \subseteq \{\bu \in \Hilbert \mid \|\bu\|_\Hilbert \leq \Lambda\}$:
\begin{align}
 \calE_D(g) \leq& \calE_{\rho}(g)
 +O\left(
 \frac{R\Lambda \sqrt{d_{\Phi, S}^* \log (mQ)}}{\rho\sqrt{m}}  + \sqrt{\frac{\log\frac{1}{\delta}}{m}}
\right).
\end{align}
  In particular,
(i) if $\Phi$ is the identity mapping (i.e., the associated kernel
 is the linear kernel), or
(ii) if $\Phi$ satisfies that
$\left\langle\Phi(\bz), \Phi(\bx) \right\rangle$ is monotone
decreasing with respect to $\|\bz-\bx\|_2$  (e.g.,
the mapping defined by the Gaussian kernel) and
$U=\{\Phi(\bs) \mid \bs \in \Real^\ell, \|\Phi(\bs)\|_\Hilbert \leq \Lambda\}$, 
then $d^*_{\Phi, S}$ can be replaced by $\ell$.
(iii) Otherwise, $d^*_{\Phi,S} \leq R\Lambda/\mu^*$.
\end{theo}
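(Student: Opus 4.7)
\medskip\noindent
\textbf{Proof plan for Theorem 1.}

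The plan is to reduce the theorem to a bound on the empirical Rademacher complexity $\Rdm_S(H_U)$ via Lemma \ref{lemm:ensemble_margin_bound} applied to the convex hull $\conv(H_U)$ (using the standard fact that $\Rdm_S(\conv(H_U)) = \Rdm_S(H_U)$). The main work is therefore to prove
\[
 \Rdm_S(H_U) \leq O\!\left( \frac{R\Lambda\sqrt{d^*_{\Phi,S}\log(mQ)}}{\sqrt{m}} \right).
\]
The naive bound loses a factor of $Q$ because the supremum inside $\Rdm_S$ is over hypotheses that take a max over $Q$ positions; the whole point is to exploit that, as $\bu$ varies over $U$, the identities of the argmax positions are piecewise constant on a mild hyperplane arrangement.

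First I would make the cell decomposition rigorous. The key observation is that for a fixed sample point $\bx_i$, the index $j^*_i(\bu) = \argmax_{j} \langle \bu, \Phi(\bx_i^{(j)})\rangle$ depends only on the signs of $\langle \bu, \Phi(\bx_i^{(j)}) - \Phi(\bx_i^{(j')}) \rangle$ for $j\neq j'$, and these difference vectors lie in $\pdiff$. Hence on each cell $I$ of the arrangement $\calI$ induced by $\pdiff$, the entire tuple $(j^*_1(\bu),\dots,j^*_m(\bu))$ is constant; call its value $(j^*_1(I),\dots,j^*_m(I))$. Within the cell, $h_\bu$ becomes a linear functional of $\bu$, so
\[
 \sup_{\bu \in U\cap I} \sum_{i=1}^m \sigma_i h_\bu(\bx_i)
 = \sup_{\bu\in U\cap I} \Bigl\langle \bu,\ \sum_{i=1}^m \sigma_i \Phi(\bx_i^{(j^*_i(I))})\Bigr\rangle
 \leq \Lambda\, \Bigl\|\sum_{i=1}^m \sigma_i \Phi(\bx_i^{(j^*_i(I))})\Bigr\|_{\Hilbert}.
\]
Taking the maximum over cells and dividing by $m$ gives an upper bound on $\Rdm_S(H_U)$ in terms of the expected maximum of $N := |\calI|$ sub-Gaussian Hilbert-space-valued sums.

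The next step is to control $N$ via $d^*_{\Phi,S}$. Each cell corresponds to a distinct sign pattern that some $f_\bu \in F_U$ produces on $\pdiff$, so by Sauer--Shelah,
\[
 N \leq \Bigl(\tfrac{e|\pdiff|}{d^*_{\Phi,S}}\Bigr)^{d^*_{\Phi,S}},
\]
and $|\pdiff| \leq (mQ)^2$, which yields $\log N = O(d^*_{\Phi,S}\log(mQ))$. Standard sub-Gaussian/Hilbert concentration (or a union-bound lemma for Rademacher complexity of a union of classes, e.g., combining Massart's finite class lemma with the linear-class bound $R\Lambda/\sqrt{m}$) then gives
\[
 \Rdm_S(H_U) \leq O\!\left(\frac{R\Lambda\sqrt{\log N}}{\sqrt{m}}\right) = O\!\left(\frac{R\Lambda\sqrt{d^*_{\Phi,S}\log(mQ)}}{\sqrt{m}}\right),
\]
and combining with Lemma \ref{lemm:ensemble_margin_bound} proves the main inequality.

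Finally I would handle the three cases for $d^*_{\Phi,S}$. For (i), $F_U$ is the class of homogeneous linear classifiers on $\Real^\ell$, whose VC dimension is $\ell$. For (ii), the monotonicity hypothesis on $K$ together with $U = \{\Phi(\bs) : \bs\in\Real^\ell\}$ lets me rewrite $\sign\langle \Phi(\bs), \Phi(\bz) - \Phi(\bz')\rangle = \sign(\|\bs-\bz'\|_2^2 - \|\bs-\bz\|_2^2)$, which is a linear classifier on the lifted coordinates $(\bz-\bz',\ \|\bz\|^2-\|\bz'\|^2)$ in $\Real^{\ell+1}$ with weight vector of the form $(2\bs,1)$; expanding shows only $\ell$ free parameters effectively survive, giving VC dimension at most $\ell$. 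For (iii), the definition of $\mu^*$ guarantees that within every cell there is a $\bu\in U$ separating the relevant $\bv\in V_I$ with margin $\mu^*$, and the classical margin-based bound on the VC dimension of linear classifiers in a ball of radius $R$ with weight-norm at most $\Lambda$ gives $d^*_{\Phi,S} = O(R\Lambda/\mu^*)$.

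The main obstacle will be the concentration step for $\Expo_\sigma \max_{I\in\calI}\|\sum_i \sigma_i \Phi(\bx_i^{(j^*_i(I))})\|_\Hilbert$: one must exploit that each individual sum is sub-Gaussian in a Hilbert space with variance proxy $R^2m$ while the cardinality $N$ enters only logarithmically, so that the dominant $\sqrt{m}$ piece is paid once and the $\log N$ only multiplies a lower-order fluctuation term. Getting the right $\sqrt{\log N/m}$ scaling (rather than $\log N/m$ or an extraneous additive term) is the technically delicate part.
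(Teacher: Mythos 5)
Your proposal follows essentially the same route as the paper: decompose $U$ into cells of the hyperplane arrangement induced by $\pdiff$ on which the argmax indices (the paper's mappings $\theta_\bu\in\Theta$) are constant, bound the per-cell supremum by Cauchy--Schwarz, count the cells via a dual-space Sauer--Shelah argument with $|\pdiff|\leq (mQ)^2$ (the paper's Lemma 4 and Theorem 2 for the three cases of $d^*_{\Phi,S}$), and finish with a maximal inequality over the $N$ cells. The only difference is in that last step, which you correctly flag as the delicate point: the paper passes to the Gaussian complexity and bounds $\Expo_{\bsigma}\sup_{\theta}\|\sum_i\sigma_i\Phi(\bx_i^{(\theta(i))})\|^2_\Hilbert$ by an explicit moment-generating-function computation for Gaussian quadratic forms (yielding the $(\sqrt{2}-1)mR^2+2mR^2\ln|\Theta|$ bound), whereas your Rademacher-plus-Lipschitz-concentration alternative achieves the same $\sqrt{\log N}$ scaling.
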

In order to prove the theorem~\ref{theo:main}, we show several 
definitions, lemmas and the proofs as following subsection.
\subsection{Proof sketch}

\begin{defi}{The set $\Theta$ of mappings from an instance to a pattern}
For any $\bu \in U$, let $\theta_\bu: [m] \to [Q]$ be a
mapping defined by
\[
\theta_{\bu}(i) := \arg\max_{j \in [Q]} 
 \left\langle \bu,  \Phi\left(\bx_i^{(j)}\right)  \right\rangle,
\]
and we denote the set of all $\theta_{\bu}$ as 
$\Theta = \{\theta_{\bu} \mid \bu \in U \}$.
\end{defi}

\begin{lemm}
\label{lemm:GC}
Suppose that for any $\bz \in \calP$, $\|\Phi(\bz)\|_\Hilbert \leq R$.
Then, the empirical Gaussian complexity of $H_U$ with respect to $S$
for $U \subseteq \{\bu \mid \|\bu\|_\Hilbert \leq \Lambda\}$
is bounded as follows: 
\[
\GC_{S}(H) \leq \frac{R\Lambda\sqrt{(\sqrt{2}-1)+ 2(\ln|\Theta|)}}{\sqrt{m}}.
\]
\end{lemm}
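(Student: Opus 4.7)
The plan is to reduce the supremum over the infinite class $H_U$ in the Gaussian complexity to a supremum over the finite index set $\Theta$, and then apply a Gaussian maximum inequality.

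First, for any $\bu \in U$ the maximum in the definition of $h_\bu(\bx_i)$ is attained at index $\theta_{\bu}(i)$, so
\[
\sum_{i=1}^m \sigma_i\, h_{\bu}(\bx_i) \;=\; \Bigl\langle \bu,\; \sum_{i=1}^m \sigma_i\, \Phi\bigl(\bx_i^{(\theta_{\bu}(i))}\bigr)\Bigr\rangle.
\]
Upper-bounding the sup over $\bu \in U$ by the joint sup over $(\theta, \bu) \in \Theta \times U$ and then applying Cauchy--Schwarz with $\|\bu\|_\Hilbert \le \Lambda$,
\[
\sup_{\bu \in U}\sum_{i=1}^m \sigma_i\, h_{\bu}(\bx_i) \;\le\; \Lambda\max_{\theta \in \Theta}\,\|X_\theta\|_\Hilbert,\qquad X_\theta := \sum_{i=1}^m \sigma_i\, \Phi\bigl(\bx_i^{(\theta(i))}\bigr).
\]
The task therefore reduces to controlling $\Expo_{\vecsigma}\max_{\theta \in \Theta}\|X_\theta\|_\Hilbert$.

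For each fixed $\theta$, the map $\vecsigma \mapsto \|X_\theta\|_\Hilbert$ is Lipschitz in $\vecsigma \in \Real^m$; its Lipschitz constant is the operator norm of the linear map $\vecsigma \mapsto X_\theta$, which is bounded by the Frobenius norm and hence by $R\sqrt{m}$. Its mean is likewise at most $\sqrt{\Expo\|X_\theta\|_\Hilbert^2}\le R\sqrt{m}$. Gaussian concentration for Lipschitz functions then supplies a sub-Gaussian MGF bound on $\|X_\theta\|_\Hilbert$, and the standard Laplace-transform union bound
\[
\Expo\max_{\theta}\|X_\theta\|_\Hilbert \;\le\; \frac{1}{\lambda}\log\sum_{\theta \in \Theta}\Expo \exp\bigl(\lambda\|X_\theta\|_\Hilbert\bigr),
\]
optimised in $\lambda>0$ against $\ln|\Theta|$, produces a bound of the desired form. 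Dividing by $m$ and inserting the factor $\Lambda$ then yields Lemma~\ref{lemm:GC}.

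The main obstacle I anticipate is pinning down the exact constant $\sqrt{2}-1$ inside the square root. A direct ``mean plus sub-Gaussian fluctuation'' decomposition only yields the looser but structurally similar bound $R\Lambda(1+\sqrt{2\ln|\Theta|})/\sqrt{m}$. To recover the tighter form $\sqrt{(\sqrt{2}-1) + 2\ln|\Theta|}$ I expect one must instead apply Jensen as $\Expo\max_\theta\|X_\theta\|_\Hilbert \le \sqrt{\Expo\max_\theta \|X_\theta\|_\Hilbert^2}$ and bound the maximum of the quadratic forms $\|X_\theta\|_\Hilbert^2 = \vecsigma^{\top}(\langle\Phi(\bx_i^{(\theta(i))}),\Phi(\bx_j^{(\theta(j))})\rangle)_{ij}\,\vecsigma$ via the Gaussian-chaos MGF (Hanson--Wright / Laurent--Massart style), with the numerical constant emerging from the scalar optimisation built into that MGF bound.
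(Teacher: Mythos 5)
Your reduction to $\frac{\Lambda}{m}\Expo_{\bsigma}\bigl[\max_{\theta\in\Theta}\|X_\theta\|_\Hilbert\bigr]$ with $X_\theta=\sum_i\sigma_i\Phi(\bx_i^{(\theta(i))})$ is exactly the paper's opening move (partition $U$ by the value of $\theta_\bu$, pull the sum inside the inner product, Cauchy--Schwarz), and the route you single out in your final paragraph as the one that yields the stated constant is precisely the paper's proof: apply Jensen in the form $\Expo\max_\theta\|X_\theta\|_\Hilbert\le\sqrt{\Expo\max_\theta\bsigma^{\top}\kernel^{(\theta)}\bsigma}$ and control the expected maximum of the Gaussian quadratic forms by the soft-max device $\exp(c\,\Expo\max_\theta)\le\sum_{\theta\in\Theta}\Expo\exp(c\,\bsigma^{\top}\kernel^{(\theta)}\bsigma)$. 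The only piece you leave unexecuted is the scalar optimisation, which in the paper goes as follows: diagonalising $\kernel^{(\theta)}$ and using rotation invariance of the Gaussian gives the chaos MGF $\prod_k(1-c\lambda_k)^{-1/2}$; taking $c=1/(2\lambda_1)$, applying the chord bound $(1-x)^{-1/2}\le 1+2(\sqrt{2}-1)x$ on $[0,\tfrac12]$ together with $\ln(1+x)\le x$, and using $\tr(\kernel^{(\theta)})\le mR^2$ and $\lambda_1\le mR^2$ yields $(\sqrt{2}-1)mR^2+2mR^2\ln|\Theta|$, which is where $\sqrt{2}-1$ comes from. Your alternative first route --- Gaussian Lipschitz concentration for $\bsigma\mapsto\|X_\theta\|_\Hilbert$ (Lipschitz constant $\sqrt{\lambda_1}\le R\sqrt{m}$, mean at most $R\sqrt{m}$) plus a Laplace-transform union bound --- is correct and more standard, but, as you yourself note, it proves $R\Lambda(1+\sqrt{2\ln|\Theta|})/\sqrt{m}$ rather than the literal inequality of the lemma; this is of the same order in $m$ and $|\Theta|$ (and indeed never smaller than the paper's bound), so it would suffice for Theorem~1, which only uses the lemma inside an $O(\cdot)$. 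In short: no gap, correct self-diagnosis, and your recommended path coincides with the paper's argument.
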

The proof is given in the supplemental material.

Thus, it suffices to bound the size $|\Theta|$. 
Naively the size $|\Theta|$ is at most $Q^m$ since there are $Q^m$
possible mappings from $[m]$ to $[Q]$. However, this naive bound is too
pessimistic.
The basic idea to get better bounds is the following.
Fix any $i \in [m]$ and consider points $\Phi(\bx_i^{1}),
\dots,\Phi(\bx_i^{Q})$.
Then, we define equivalence classes of $\bu$ such that
$\theta_\bu(i)$ is the same, which define a Voronoi diagram
for the points $\Phi(\bx_i^{1}),\dots,\Phi(\bx_i^{Q})$.
Note here that the closeness is measured by the inner product, not a
distance. More precisely, 
let $V_i=(V_i^{(1)},\dots,V_i^{(Q)})$ be the Voronoi diagram
defined as $V_i^{(j)}=\{\bu \in \Hilbert \mid \theta_\bu(i) = j\}$.
Let us consider the set of intersections
$\cap_{i\in [m]}V_i^{(j_i)}$ for all
combinations of $(j_1,\dots, j_m) \in [Q]^m$.
The key observation is that each non-empty intersection corresponds to a
mapping $\theta \in \Theta$. Thus,
we obtain $|\Theta|=(\text{the number of intersections $\cap_{i\in
[m]}V_{i}^{(j_i)}$})$. In other words, the size of $\Theta$ is exactly
the number of rooms defined by the intersections of $m$ Voronoi
 diagrams $V_1,\dots,V_m$.
 From now on, we will derive upper bounds based on this observation.
 
 \begin{lemm}
  \label{lemm:Theta}
 \[
  |\Theta| =O((mQ)^{2d_{\Phi,S}^*}).
 \]
 \end{lemm}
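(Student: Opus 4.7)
The strategy is to reduce bounding $|\Theta|$ to counting the number of sign patterns induced by the class $F_U$ on a finite subset of $\pdiff$, and then invoke the Sauer--Shelah lemma with VC dimension $d^*_{\Phi,S}$.

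The first step is the key reduction. For each $i \in [m]$ and each pair $j \neq j'$ in $[Q]$, define the difference vector
\[
\bv_i^{(j,j')} \;=\; \Phi(\bx_i^{(j)}) - \Phi(\bx_i^{(j')}),
\]
which lies in $\pdiff$ since $\bx_i^{(j)}, \bx_i^{(j')} \in \calP_S$. Let $D_S = \{\bv_i^{(j,j')} : i \in [m],\ j \neq j' \in [Q]\} \subseteq \pdiff$. Observe that $|D_S| \leq m Q(Q-1) \leq mQ^2$. The crucial observation is that $\theta_\bu$ is completely determined by the sign vector
\[
\bigl(\sign(\langle \bu, \bv_i^{(j,j')}\rangle)\bigr)_{i,j,j'},
\]
because $\theta_\bu(i) = j$ (up to a fixed tie-breaking rule) if and only if $\langle \bu, \bv_i^{(j,j')}\rangle \geq 0$ for all $j' \neq j$. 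Hence the map $\bu \mapsto \theta_\bu$ factors through the sign pattern of $F_U$ on $D_S$, and therefore
\[
|\Theta| \;\leq\; \bigl|\{(f(\bv))_{\bv \in D_S} : f \in F_U\}\bigr|.
\]

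The second step is to apply the Sauer--Shelah lemma to the class $F_U = \{\bv \mapsto \sign(\langle \bu, \bv\rangle) : \bu \in U\}$, whose VC dimension (restricted to $\pdiff$, hence to $D_S$) is by definition $d^*_{\Phi,S}$. Sauer--Shelah then bounds the number of distinct sign patterns of $F_U$ on the $|D_S| \leq mQ^2$ points of $D_S$ by $O(|D_S|^{d^*_{\Phi,S}}) = O((mQ^2)^{d^*_{\Phi,S}}) = O((mQ)^{2d^*_{\Phi,S}})$.

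Combining the two steps yields the claimed bound $|\Theta| = O((mQ)^{2d^*_{\Phi,S}})$. The main conceptual hurdle is the first step — recognizing that although $\theta_\bu$ arises from an $\arg\max$ over $Q$ points in $\Hilbert$ for each of $m$ examples (which naively suggests the trivial bound $Q^m$), all of the relevant information about $\bu$ is encoded by signs of inner products with difference vectors that live in the a priori fixed set $\pdiff$. Once this is in place, the routine Sauer--Shelah calculation closes the gap and gives a bound polynomial in $mQ$ rather than exponential in $m$.
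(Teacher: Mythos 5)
Your proof is correct and follows essentially the same route as the paper's: reduce $|\Theta|$ to counting sign patterns of the linear classifiers $F_U$ on difference vectors drawn from $\pdiff$, then apply Sauer--Shelah with VC dimension $d^*_{\Phi,S}$. The only (harmless) difference is that you restrict to the within-instance differences $D_S$ of size at most $mQ^2$ rather than all $\binom{mQ}{2}$ pairs in $\pdiff$ as the paper does, which if anything gives a marginally tighter constant in the exponent's base while yielding the same stated bound $O((mQ)^{2d^*_{\Phi,S}})$.
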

The proof is shown in the supplemental material.

  \begin{theo}\noindent \mbox\\
   \label{theo:Theta} 
  \begin{enumerate}
   \item[(i)] If $\Phi$ is the identity mapping over $\calP$, then
	      $|\Theta|=O((mQ)^{2\ell})$.
   \item[(ii)] if $\Phi$ satisfies that $\left\langle\Phi(\bz), \Phi(\bx) \right\rangle$ is monotone
decreasing with respect to $\|\bz-\bx\|_2$  (e.g.,
	the mapping defined by the Gaussian kernel) and
 $U=\{\Phi(\bs) \mid \bs \in \Real^\ell, \|\Phi(\bs)\|_\Hilbert \leq \Lambda\}$, 
then $|\Theta|=O((mQ)^{2\ell})$.
   \item[(iii)] Otherwise,
	         $|\Theta|=O((mQ)^{R\Lambda/\mu^*})$.
  \end{enumerate}
  \end{theo}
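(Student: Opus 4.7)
The plan is to derive each of the three bounds as a direct consequence of Lemma~\ref{lemm:Theta} by controlling the VC dimension $d_{\Phi,S}^*$ of $F_U = \{\bv \mapsto \sign(\langle \bu, \bv \rangle) \mid \bu \in U\}$ on the finite set $\pdiff$. Since $|\calP_S| \leq mQ$ and hence $|\pdiff| \leq (mQ)^2$, the Sauer--Shelah machinery already encoded in Lemma~\ref{lemm:Theta} reduces the theorem, case by case, to an upper bound on $d_{\Phi,S}^*$.

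For case (i), $\Phi$ is the identity, so $U \subseteq \Real^\ell$ and $F_U$ is a subclass of homogeneous linear threshold functions in $\Real^\ell$; its VC dimension is at most $\ell$, and Lemma~\ref{lemm:Theta} yields $|\Theta| = O((mQ)^{2\ell})$ directly. For case (ii), the key step is to rewrite the sign condition using the monotonicity hypothesis. For $\bu = \Phi(\bs)$ with $\bs \in \Real^\ell$ and $\bv = \Phi(\bz) - \Phi(\bz') \in \pdiff$, the inner product $\langle \bu, \bv \rangle = K(\bs, \bz) - K(\bs, \bz')$ has the same sign as
\[
\|\bs - \bz'\|_2^2 - \|\bs - \bz\|_2^2 \;=\; 2\langle \bs, \bz - \bz' \rangle + \|\bz'\|_2^2 - \|\bz\|_2^2,
\]
which is affine linear in $\bs \in \Real^\ell$. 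Hence $F_U$ restricted to $\pdiff$ is realized by a subclass of affine classifiers on $\Real^\ell$ with VC dimension at most $\ell + 1$, and Lemma~\ref{lemm:Theta} again gives $|\Theta| = O((mQ)^{2\ell})$ after absorbing the additive constant.

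For case (iii), I would apply a margin-based VC-dimension bound. By definition of $\mu^*$, every non-empty cell $I \in \calI$ meeting $U$ contains some $\bu \in I \cap U$ satisfying $|\langle \bu, \bv \rangle| \geq \mu^*$ for every $\bv$ in the minimal defining set $V_I$. Combined with $\|\bu\|_\Hilbert \leq \Lambda$ and the bound $\|\bv\|_\Hilbert \leq 2R$ for $\bv \in \pdiff$, a standard margin-based VC bound for homogeneous linear classifiers (fat-shattering or Novikoff-type) yields $d_{\Phi,S}^* = O(R\Lambda/\mu^*)$. Feeding this into Lemma~\ref{lemm:Theta} gives $|\Theta| = O((mQ)^{R\Lambda/\mu^*})$.

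The main obstacle is case (iii): the margin $\mu^*$ is guaranteed only against the minimal bounding set $V_I$ of each cell, not against all of $\pdiff$, so a black-box application of fat-shattering (which needs a margin on every point one wishes to shatter) does not apply out of the box. Resolving this requires either arguing that witnesses for shattering can always be chosen from active constraints of some cell, so that margin on $V_I$ is enough, or proving a direct cell-counting bound for hyperplane arrangements under a margin assumption on active constraints alone, which would bypass the VC dimension. Cases (i) and (ii) are then essentially algebraic once the correct reduction (the identity map, respectively the perpendicular-bisector rewriting via monotonicity) has been identified.
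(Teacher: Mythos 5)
Your proposal takes essentially the same route as the paper: each case is reduced to a bound on the VC dimension $d_{\Phi,S}^*$, which is then fed into Lemma~\ref{lemm:Theta}. Cases (i) and (ii) match the paper's proof, and your perpendicular-bisector computation in (ii) is actually more explicit than the paper's one-line identification of the feature-space cells with Euclidean Voronoi cells in $\Real^\ell$ (both arguments really produce \emph{affine} classifiers, hence VC dimension $\ell+1$ rather than $\ell$; the paper absorbs this silently, as do you).

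The difficulty you flag in case (iii) is genuine, but you should know it is a gap in the paper's own proof as well, not an idea you failed to find: the paper defines $\mu^*$ via $\min_{\bv \in V_I}$ over the facet-defining set only, and then quietly bounds the number of labelings by the class of $\bu$ satisfying $\min_{\bz \in \pdiff}\,\lvert\langle \bu, \bz\rangle\rvert = \mu^*$ --- precisely the unjustified strengthening you identify. A way to close it, along the lines of your first suggested fix, is geometric: if $\bu$ lies in the open cell $I$ with $\lvert\langle \bu, \bv\rangle\rvert \geq \mu^*$ for every facet $\bv \in V_I$, then its distance to the boundary of $I$ is at least $\mu^*/\max_{\bv \in V_I}\lVert \bv\rVert_\Hilbert \geq \mu^*/(2R)$, and since no hyperplane of $\pdiff$ meets the open cell, $\bu$ is at least that far from \emph{every} hyperplane of $\pdiff$. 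This geometric margin on all points is what the margin-based (fat-shattering/Novikoff-type) VC bound actually requires, and it recovers the claimed $O(R\Lambda/\mu^*)$-type control of $d_{\Phi,S}^*$ up to the constants and normalizations that the paper leaves implicit.
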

The proof is shown in the supplemental material.

Now we are ready to prove Theorem~\ref{theo:main}.
\begin{proof}[Proof of Theorem~\ref{theo:main}]
By using Lemma~\ref{lemm:RC_and_GC}, and
 \ref{lemm:ensemble_margin_bound},
we obtain the generalization bound in terms of the Gaussian
 complexity of $H$. Then, by applying Lemma~\ref{lemm:GC} and Theorem~\ref{theo:Theta},
we complete the proof.
\end{proof}

\section{Optimization problem formulation}
In this section, we formulate an optimization problem to learn ensembles
in $\conv(H_U)$ for $U \subseteq \{ \bu: \|\bu\|_\Hilbert \leq \Lambda\}$.
The problem is a $1$-norm constrained soft margin optimization problem
using hypotheses in $H_U$, which is a linear program.

\begin{align}\label{align:LPBoostPrimal}
\max\limits_{\rho,\bw,\bxi} \quad &
 \rho  - \frac{1}{\nu m}\sum_{i=1}^m \xi_{i}
\\ \nonumber
 \text{sub.to} \quad 
& \int_{\bu \in U}y_i \bw_{\bu}h_\bu(\bx_i) 
d\bu 
                       \geq \rho -\xi_{i}, ~ i \in [m],
\int_{\bu \in U} w_{\bu}d\bu  = 1,
 \bw  \geq \boldsymbol{0},~ \rho \in \Real,
\end{align}
where $h_\bu \in H_{U}$ is a given base classifiers, $\nu \in [0,1]$ is a constant parameter, 
$\rho$ is a \emph{target margin} and $\xi_i$ is a
slack variable.
The dual problem
(\ref{align:LPBoostPrimal}) 
is given as follows.
\begin{align}\label{align:LPBoostDual}
\min\limits_{\gamma,\bd} \quad & \gamma
\\ \nonumber
\text{sub.to} \quad 
& \sum_{i=1}^my_id_i h_\bu(\bx_i) \leq \gamma,
\quad  \bu \in U,~~
0 \leq d_i \leq 1/\nu m ~ (i \in [m]),~~
 \sum_{i=1}^m d_{i}  = 1, ~ \gamma \in \Real.
\end{align}
The dual problem is categorized as a semi-infinite program (SIP), since
it contains possibly infinitely many constraints.
Note that the duality gap is zero since the problem
(\ref{align:LPBoostDual})
is convex and the optimum is finite \citep[][Theorem 2.2]{Shapiro09}.
Our approach is to approximately solve
the primal and the dual problems for $U=\{ \bu: \|\bu\|_\Hilbert \leq \Lambda\}$
by solving the sub-problems over a finite subset $U' \subset U$.
Such approach is called column generation in linear programming, 
which add a new constraint (column) to the dual problems and solve them
iteratively.
LPBoost~\citep{demiriz-etal:ml02} is a well known example of the approach
in the boosting setting. At each iteration, LPBoost chooses a hypothesis
$h_\bu$ so that $h_\bu$ maximally violates the constraints in the current
dual problem. This sub-problem is called weak learning in the boosting
setting and formulated as follows:
\begin{align}\label{align:WeakLearn_u}
\max_{\bu \in \Hilbert} \quad 
\sum_{i=1}^my_id_i
\max_{j \in [Q]}  \left\langle \bu,  \Phi\left(\bx_i^{(j)}\right)
                       \right\rangle \quad \text{sub.to} \quad \|\bu\|_\Hilbert^2 \leq \Lambda^2.
\end{align}
However, the problem (\ref{align:WeakLearn_u}) cannot be solved
 directly, since we have only access to $U$ though the associated kernel.
Fortunately, the optimal solution $\bu^*$ of the problem 
can be written as a linear combination of  the functions
$K(\bx_i, \cdot)$ because of the following representer theorem. 
\begin{theo}[Representer Theorem]\label{theo:represent}
The optimal solution $\bu^*$ of optimization problem (\ref{align:WeakLearn_u}) has the form of
$\bu^* = \sum_{i=1}^m\sum_{j=1}^Q \alpha_{ij}K(\bx_i^{(j)}, \cdot)$.
\end{theo}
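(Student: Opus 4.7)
The plan is to apply the standard orthogonal decomposition argument underlying most representer theorems, adapted to the fact that the objective is a sum of inner products passed through a $\max$ over $j$. First I would let $V = \mathrm{span}\{\Phi(\bx_i^{(j)}) : i \in [m], j \in [Q]\}$, which is a finite-dimensional subspace of $\Hilbert$ (of dimension at most $mQ$). Any $\bu \in \Hilbert$ then decomposes uniquely as $\bu = \bu_\parallel + \bu_\perp$ with $\bu_\parallel \in V$ and $\bu_\perp \in V^\perp$.

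Next I would verify that the objective in (\ref{align:WeakLearn_u}) depends only on $\bu_\parallel$. Since $\Phi(\bx_i^{(j)}) \in V$ and $\bu_\perp \perp V$, we have $\langle \bu, \Phi(\bx_i^{(j)}) \rangle = \langle \bu_\parallel, \Phi(\bx_i^{(j)}) \rangle$ for every $i,j$. Hence for each $i$ the inner maximum $\max_{j \in [Q]} \langle \bu, \Phi(\bx_i^{(j)}) \rangle$ equals $\max_{j \in [Q]} \langle \bu_\parallel, \Phi(\bx_i^{(j)}) \rangle$, and therefore the whole objective, including the weighted sum over $i$ with weights $y_i d_i$, takes the same value at $\bu$ and at $\bu_\parallel$. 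The $\max$ does not cause trouble here because it acts separately on quantities that only see $\bu_\parallel$.

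Then I would use the Pythagorean identity $\|\bu\|_\Hilbert^2 = \|\bu_\parallel\|_\Hilbert^2 + \|\bu_\perp\|_\Hilbert^2$ to conclude that replacing $\bu$ by $\bu_\parallel$ never increases the norm, so feasibility $\|\bu\|_\Hilbert^2 \leq \Lambda^2$ is preserved. Consequently, given any optimal $\bu^*$, its projection $\bu^*_\parallel$ onto $V$ is also feasible and achieves the same objective value; thus one may always choose an optimum lying in $V$. Writing this optimum as a linear combination $\bu^* = \sum_{i=1}^m \sum_{j=1}^Q \alpha_{ij} \Phi(\bx_i^{(j)})$ and identifying $\Phi(\bx_i^{(j)})$ with the canonical RKHS feature $K(\bx_i^{(j)}, \cdot)$ yields the claimed form.

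I do not expect a serious obstacle: the only thing to be a little careful about is that the pointwise maximum over $j$ is not linear in $\bu$, but it still factors through the inner products $\langle \bu, \Phi(\bx_i^{(j)})\rangle$, which is all the decomposition argument needs. If one prefers a completely general statement, the same proof goes through by appealing to the general representer theorem of Sch\"olkopf--Herbrich--Smola, since the problem has the form ``monotone nondecreasing function of $\|\bu\|_\Hilbert$ as a regularizer'' (in fact a hard constraint) plus a loss depending only on the evaluations $\langle \bu, \Phi(\bx_i^{(j)})\rangle$.
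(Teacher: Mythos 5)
Your proof is correct, and it is in fact a more direct route than the one in the paper. Both arguments ultimately rest on the same projection/Pythagorean step — decompose $\bu$ into a component in $V=\mathrm{span}\{\Phi(\bx_i^{(j)})\}$ plus an orthogonal remainder, note that every evaluation $\langle \bu, \Phi(\bx_i^{(j)})\rangle$ is unchanged by dropping the remainder, and that dropping it can only shrink $\|\bu\|_\Hilbert$ — but the paper reaches that step only after first partitioning $U$ by the mapping $\theta_\bu$ (which index $j$ attains the max for each $i$), rewriting the problem as a maximum over convex sub-problems with linear constraints, and passing to a Lagrangian penalized form $\beta\|\bu\|_\Hilbert^2 - \sum \lambda_{i,j}\langle\bu,\cdot\rangle - \sum y_i d_i\langle\bu,\cdot\rangle$ for each fixed $\theta$, to which the standard representer argument is applied piecewise. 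Your observation that the $\max_j$ simply factors through the inner products makes this case split unnecessary: the whole objective is a function of the finitely many evaluations $\langle\bu,\Phi(\bx_i^{(j)})\rangle$ and the constraint is monotone in $\|\bu\|_\Hilbert$, so the generalized representer theorem (or the one-shot projection argument you give) applies directly, with no convexity needed anywhere. What the paper's detour buys is that each sub-problem is convex and the penalized form makes the optimum \emph{strictly} prefer $\bu^\perp=0$; your hard-constraint version only shows that the projection of an optimum is again an optimum, i.e., that \emph{some} optimal solution has the claimed form — but that is all the theorem as stated actually needs (and the "the optimal solution" phrasing is a harmless imprecision shared by the paper). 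One small point worth a sentence in a careful write-up: the existence of a maximizer over the ball $\{\|\bu\|_\Hilbert\le\Lambda\}$ follows because the objective is weakly continuous (it depends on finitely many bounded linear functionals) and the ball is weakly compact.
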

The proof is shown in the supplemental material.

By Theorem~\ref{theo:represent}, we can design 
a weak learner by solving the following equivalent problem:
\begin{align}\label{align:WeakLearn}
&\min_{\balpha} 
- \sum_{p:y_p=+1}\hat{d}_p
\max_{j \in [Q]} \sum_{i=1}^m \sum_{k=1}^{Q} \alpha_{ik} K\left( \bx_i^{(k)}, \bx_p^{(j)}\right)
+ \sum_{q:y_q=-1}\hat{d}_q 
\max_{j \in [Q]} \sum_{i=1}^m \sum_{k=1}^{Q}\alpha_{ik} K\left(
                      \bx_i^{(k)}, \bx_q^{(j)} \right) 
\\ \nonumber
&\text{sub.to} \quad 
\sum_{i, k}^m\sum_{j, l}^{Q}\alpha_{ij}\alpha_{kl}K \left(\bx_i^{(j)}, \bx_k^{(l)} \right)  \leq \Lambda^2.
\end{align}
\section{Algorithm}
The optimization problem (\ref{align:WeakLearn}) is difference of
convex functions (DC) programming problem
and we can obtain local optimum $\epsilon$-approximately by 
using DC Algorithm~\citep{Tao1988}.
For the above optimization problem,
we replace $\max_{j \in [Q]}\sum_{i=1}^m\sum_{k=1}^{Q} \alpha_{ik} K\left( \bx_i^{(k)}, \bx_q^{(j)}
  \right)$ with $\lambda_q$, then we get the equivalent 
optimization problem as below.
\begin{align}\label{align:WeakLearn2}
\min_{\balpha, \boldsymbol{\lambda}} \quad& 
- \sum_{p:y_p=+1} \hat{d}_p
\max_{j \in [Q]} \sum_{i=1}^m \sum_{k=1}^{Q} \alpha_{ik} K\left( \bx_i^{(k)}, \bx_p^{(j)}\right)
+ \sum_{q:y_q=-1}\hat{d}_q \lambda_q 
\\ \nonumber
\text{sub.to} \quad&  \sum_{i=1}^m\sum_{k=1}^Q\alpha_{ik} K\left( \bx_i^{(k)}, \bx_q^{(j)}  \right) 
                      \leq \lambda_q 
                      ~(j \in [Q], \forall q:y_q=-1), ~~\\ \nonumber
&\sum_{i, k=1}^m\sum_{j, l=1}^{Q}\alpha_{ij}\alpha_{kl}K \left(\bx_i^{(j)}, \bx_k^{(l)} \right)  \leq \Lambda^2.
\end{align}

We show the pseudo code of LPBoost using column 
generation algorithm,
and our weak learning algorithm 
using DC programming in 
Algorithm~\ref{alg:LPBoost} and~\ref{alg:WeakLearn}, respectively. 
In Algorithm~\ref{alg:WeakLearn}, 
the optimization problem (\ref{align:subprob}) can be solved by
standard QP solver.
\begin{algorithm}
\caption{LPBoost with WeakLearn}
\label{alg:LPBoost}
\begin{enumerate}
 \item {\bf Input}: $S$, $\Lambda$, $\epsilon > 0$
 \item {\bf Initialize}: $\bd_0 \leftarrow (\frac{1}{m}, \ldots, \frac{1}{m})$
 \item {\bf For} $t=1,\dots, T$
\begin{enumerate}
 \item $h_t\leftarrow$  Run {\bf WeakLearn}($S$, $\bd_{t-1}$,
   $\Lambda$, $\epsilon$)
 \item Solve optimization problem:  
\begin{align}
\nonumber
& (\gamma, \bd_t) \leftarrow \arg\min\limits_{\gamma,\bd} \quad  \gamma \quad \quad \quad
\\ \nonumber
&\text{sub.to}~~ 
 \sum_{i=1}^my_id_i h_j(\bx_i) \leq \gamma
~~(j \in [t]), ~~
0 \leq d_i \leq 1/\nu m ~~(i \in [m]),  \sum_{i=1}^m d_i  = 1, ~ \gamma \in \Real.
\end{align}
\end{enumerate}
\item $\bw \leftarrow$ Lagrangian multipliers of solution of last optimization problem
\item $g \leftarrow \sum_{j=1}^T w_j h_j$
\item {\bf Output}: $\sign(g)$
\end{enumerate}
\end{algorithm}
\begin{algorithm}
\caption{WeakLearn by DC Algorithm}
\label{alg:WeakLearn}
\begin{enumerate}
 \item {\bf Input}:  $S$, 
    $\bd$, $\Lambda$, $\epsilon$ (convergence parameter)
 \item {\bf Initialize}: $\balpha_0 \in \Real^{m \times Q}$, $f_0 \leftarrow \infty$
 \item {\bf For} $t=1,\dots $
\begin{enumerate}
 \item $j^*_p \leftarrow \arg\max_{j \in [Q]} \sum_{i=1}^m \sum_{k=1}^{Q}
   \alpha_{(t-1, ik)} K\left( \bx_i^{(k)}, \bx_p^{(j)}\right)$
   ~$(\forall p:y_p=1)$
 \item Solve optimization problem: 
\begin{align}\label{align:subprob}
 f \leftarrow \min_{\balpha, \boldsymbol{\lambda}} \quad& 
- \sum_{p:y_p=+1} \hat{d}_p
\sum_{i=1}^m \sum_{k=1}^{Q} \alpha_{ik} K\left( \bx_i^{(k)}, \bx_p^{(j^*_p)}\right)
+ \sum_{q:y_q=-1}\hat{d}_q \lambda_q 
\\ \nonumber
\text{sub.to} \quad&  \sum_{i=1}^m\sum_{k=1}^Q \alpha_{ik} K\left( \bx_i^{(k)}, \bx_q^{(j)}  \right) 
                      \leq \lambda_q 
                      ~(j \in [Q], \forall q:y_q=-1),\\ \nonumber
&\sum_{i, k=1}^m\sum_{j, l=1}^{Q}\alpha_{ij}\alpha_{kl}K \left(\bx_i^{(j)}, \bx_k^{(l)} \right)  \leq \Lambda^2.
\end{align}
 $\balpha_t \leftarrow \balpha$, $f_t \leftarrow f$
 \item {\bf If} $f_{t-1} - f_{t} \leq \epsilon$, {\bf then break}.
\end{enumerate}
\item {\bf Output}: $h$ such that $h(\bz) =
  \max_j\sum_{i=1}^m\sum_{k=1}^Q\alpha_{(t, ik)}K(\bx_i^{(k)}, \bz^{(j)})$
\end{enumerate}
\end{algorithm}
It is interesting to note that, 
if we restrict $\alpha_{ij}$ to unit vector,
we can get the optimal $\alpha$ analytically in each weak learning step, 
and the final hypothesis obtained by LPBoost totally behaves 
like time-series shapelets. 

\section{Experiments}
In the following experiments, we show that
our methods are practically effective
for time-series classification problem
as one of the applications.

\subsection{Classification accuracy for UCR datasets}
We use UCR datasets~\citep{UCRArchive}, that are often used as
benchmark datasets for time-series classification methods.
For simplicity, we use several binary classification datasets 
(of course, \Ourmethod is applicable to multi-class).
The detailed information of the datasets is described 
in the left side of Table~\ref{tab:acc1}.
In our experiments, we assume that patterns are subsequence of 
a time series of length $\ell$. 

We set the hyper-parameters as following:
The length $\ell$ of pattern was searched in
$\{0.1, 0.2, 0.3, 0.4\}\times L$, 
where $L$ is the length of each time-series dataset,
and $\nu = \{0.2, 0.15, 0.1\}$.
We found good $\ell$ and $\nu$ through a grid search via $5$-fold cross
validation.
We use the Gaussian kernel $K(\bx, \bx') = \exp(-\sigma\|\bx -
\bx'\|^2)$.
For each $\ell$, we choose $\sigma$ from $\{0.0001, 0.001, \ldots, 10000 \}$,
which maximizes the variance of the values of the kernel matrix.
We set $\Lambda = 1$, and the number of maximum iterations of LPBoost is $100$, and
the number of maximum iterations of DC Programming is $10$.

Unfortunately, the quadratical normalization constraints of the optimization 
problem~(\ref{align:WeakLearn2}) have highly computational costs.
Thus, in practice, we replace the constraint with $1$-norm
regularization: $\|\balpha\|_1
\leq \Lambda$ and solve the linear program. 
We expect to obtain sparse solutions
while that makes the decision space of $\alpha$ small.
As a LP solver for the optimization problem of the weak learner 
and LPBoost, we used the CPLEX software.

The results of classification accuracy are shown in the right side of Table~\ref{tab:acc1}.
We referred to the experimental result reported by~\citet{HouKZ16} 
with regard to the accuracies of the following three state-of-the-arts
algorithms, 
LTS~\citep{Grabocka:2014:LTS:2623330.2623613}, 
IG-SVM~\citep{Hills:2014:CTS:2597434.2597448},
and FLAG~\citep{HouKZ16}.
It is reported that the above algorithms are 
highly accurate in practice.
Particularly, LTS is known as one of the most accurate algorithm in practice, 
however, it highly and delicately depends on the input
hyper-parameters \citep[see, e.g.,][]{WistubaGS15}.
Many of other existing shapelet-based methods including them
have difficulty of adjusting hyper-parameters.
However, as shown in Table~\ref{tab:acc1}, 
our algorithm had competitive performance 
for used datasets with a little effort to parameter search.

\begin{table*}[ht!]
\centering
\caption{The detailed information of used datasets and Classification accuracies (\%). \label{tab:acc1}}
\begin{tabular}{|c |r|r|r ||r|r|r|r| } \hline
dataset & \# train &  \# test & length &IG-SVM & LTS &FLAG& \Ourmethod   \\ \hline
ECGFiveDays &23 & 861 & 136 & 99.0 & {\bf 100.0}& 92.0 & 99.5  \\ \hline
Gun-Point & 50 & 150 & 150 & {\bf 100.0} & 99.6 & 96.7 & 98.7\\ \hline
ItalyPower. & 67 & 1029 & 24& 93.7 & {\bf 95.8} &94.6 & 94.5   \\ \hline
MoteStrain & 20 & 1252 & 84 & 88.7 & {\bf 90.0} &88.8 & 81.8  \\ \hline
SonyAIBO. & 20 & 601 & 70 &  92.7 & 91.0 &92.9 & {\bf 93.2} \\ \hline
TwoLeadECG & 23 & 1139 & 82 & {\bf 100.0} & {\bf 100.0} & 99.0 &93.5  \\ \hline
\end{tabular}
\end{table*}

\subsection{Sparsity and visualization analysis}
It is said that shapelets-based hypotheses
have great visibility \citep[see, e.g.,][]{Ye:2009:TSS:1557019.1557122}.
Now, we show that the solution obtained by \Ourmethod has 
high sparsity and it induces visibility, 
despite our final hypothesis contains (non-linear) kernels.
Let us explain the sparsity using Gun-point dataset as an example. 
Now, we focus on the final hypothesis: 
$g = \sum_{t=1}^Tw_th_t$, where $h_t=\max_{j}\sum_{k=1}^m\sum_{l=1}^Q \alpha_{t,
  kl}K(\bx_k^{(l)}, \cdot^{(j)})$.
The number of final hypotheses $h_t$s such that $w_t \neq 0$
is $5$, and the number of non-zero elements $\alpha_{t, kl}$ of 
such $h_t$s is $26$ out of $34000$ ($0.6\%$). 
\begin{figure}
 \begin{center}
  \includegraphics[width=75mm]{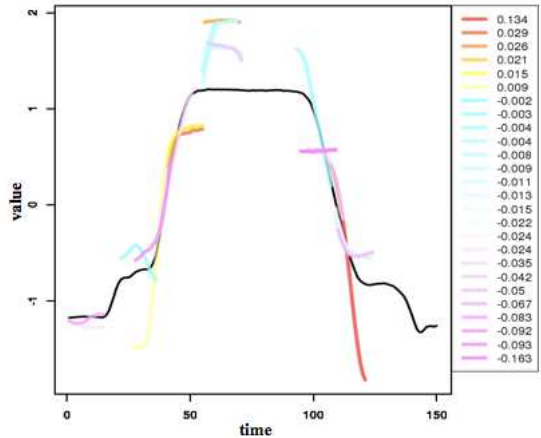}
 \caption{Visualization of discriminative pattern for an example of
   Gun-point data (negative label). 
 Black line is original time series.
 Positive value (red to yellow line) indicates the contribution rate
 for positive label,
 and negative value (light blue to purple line) indicates the
 contribution rate for negative label. \label{fig:gunpoint}}
 \end{center}
\end{figure}
Actually, for the other datasets, 
percentages of non-zero elements are from $0.08\%$ to $6\%$.
It is clear that the solution obtained by \Ourmethod has high sparsity.

Figure~\ref{fig:gunpoint} 
is an example of visualization
of a final hypothesis obtained by \Ourmethod for Gun-point time series
data.
The colored lines shows all of the 
$\bx_{k}^{(l)}$s in $g$ where both $w_t$ and $\alpha_{t, kl}$ are non-zero.
Each value of the legends show the multiplication of 
$w_t$ and $\alpha_{t, kl}$ corresponding to $\bx_{k}^{(l)}$.
Since it is hard to visualize the local features
over the Hilbert space,
we plotted each of them to match the original time series based on 
Euclidean distance.
In contrast to visualization analyses by 
time-series shapelets \citep[see, e.g.,][]{Ye:2009:TSS:1557019.1557122}, 
our visualization, colored lines and plotted position,
do not strictly represent the 
meaning of the final hypothesis because 
of the non-linear feature mapping.
However, we can say that colored lines represent
``discriminative pattern'' and certainly make important 
contributions to classification.
Thus, we believe that our solutions are useful for
domain experts to interpret important patterns.
\section{Conclusion}
\label{sec:conclusion}
In this paper, we show 
generalization error bounds of the hypothesis classes based on local features.
Further, we formulate an optimization problem, which fits in the
boosting framework. We design an efficient algorithm for the weak
learner using DC programming techniques.
Experimental results show that our method achieved competitive accuracy
with the existing methods with small parameter-tuning costs.

\bibliography{ref,similarity,hatano}
\section{Supplemental material}
\subsection{Proof of Lemma 3}
\begin{proof} 
Since $U$ can be partitioned into
	$\cup_{\theta \in \Theta} \{ \bu \in U \mid \theta_\bu = \theta\}$,
\begin{align}
\label{eq:gauss1}
\nonumber
\GC_{S}(H_U) &= \frac{1}{m} \Expo_{\bsigma} \left[\sup_{\theta \in \Theta} 
\sup_{\bu \in U:\theta_{\bu}=\theta} \sum_{i=1}^m \sigma_i
\left\langle \bu, \Phi\left(\bx_i^{(\theta(i))}\right)
             \right\rangle\right] \\ \nonumber
&= \frac{1}{m} \Expo_{\bsigma} \left[\sup_{\theta \in \Theta} 
\sup_{\bu \in U:\theta_{\bu}=\theta} \left\langle \bu,
 \left(\sum_{i=1}^m \sigma_i \Phi\left(\bx_i^{(\theta(i))}\right)
  \right) \right \rangle \right] \\ \nonumber
&\leq \frac{1}{m} \Expo_{\bsigma} \left[\sup_{\theta \in \Theta} 
\sup_{\bu \in U} \left\langle \bu, 
\left(\sum_{i=1}^m \sigma_i \Phi\left(\bx_i^{(\theta(i))}\right)
  \right) \right \rangle \right] \leq \frac{\Lambda}{m} \Expo_{\bsigma} \left[
  \sup_{\theta \in \Theta} \left\| \sum_{i=1}^m \sigma_i
  \bx_i^{(\theta(i))} \right\|_\Hilbert
  \right]\\ \nonumber
  &= \frac{\Lambda}{m} \Expo_{\bsigma} \left[
  \sup_{\theta \in \Theta} \sqrt{\left\|\sum_{i=1}^m \sigma_i
  \Phi\left(\bx_i^{(\theta(i))}\right)\right\|_\Hilbert^2}
  \right] = \frac{\Lambda}{m} \Expo_{\bsigma} \left[
  \sqrt{\sup_{\theta \in \Theta} \left\|\sum_{i=1}^m \sigma_i
  \Phi\left(\bx_i^{(\theta(i))}\right)\right\|_\Hilbert^2}
  \right]\\ 
& \leq \frac{\Lambda}{m} \sqrt{ 
  \Expo_{\bsigma} \left[
  \sup_{\theta \in \Theta} \left\|\sum_{i=1}^m \sigma_i
  \Phi\left(\bx_i^{(\theta(i))}\right)\right\|_\Hilbert^2
  \right]}.
\end{align}
The first inequality is derived from the relaxation of $\bu$,
the second inequality is due to Cauchy-Schwarz inequality and
the fact $\|\bu\|_{\Hilbert} \leq \Lambda$, and
the last inequality is due to Jensen's inequality.
We denote by $\kernel^{(\theta)}$ the kernel matrix such that 
$\kernel_{ij}^{(\theta)} = \langle \Phi(\bx_i^{(\theta(i))}),
\Phi(\bx_j^{(\theta(j))}) \rangle$.
 Then, we have
 \begin{align}
  \Expo_{\bsigma} \left[
  \sup_{\theta \in \Theta} \left\|\sum_{i=1}^m \sigma_i
  \Phi\left(\bx_i^{(\theta(i))}\right)\right\|_\Hilbert^2
  \right]
  =
  \Expo_{\bsigma}\left[ \sup_{\theta \in \Theta}  
\sum_{i,j=1}^m \sigma_i\sigma_j 
\kernel_{ij}^{(\theta)}
\right].
 \end{align}
We now derive an upper bound of the r.h.s. as follows.
 
For any $c>0$, 
\begin{align*}
&\exp\left(
c \Expo_{\bsigma}\left[ \sup_{\theta \in \Theta}  
\sum_{i,j=1}^m \sigma_i\sigma_j 
\kernel_{ij}^{(\theta)}
\right]
\right) 
\leq 
\Expo_{\bsigma}\left[\exp\left(
c \sup_{\theta \in \Theta}  
\sum_{i,j=1}^m \sigma_i\sigma_j 
\kernel_{ij}^{(\theta)}
\right)
\right] \\
= &
\Expo_{\bsigma}\left[
\sup_{\theta \in \Theta}  
\exp\left(c
\sum_{i,j=1}^m \sigma_i\sigma_j 
\kernel_{ij}^{(\theta)}
\right)
\right] 
\leq 
\sum_{\theta \in \Theta}
\Expo_{\bsigma}\left[
\exp\left(c
\sum_{i,j=1}^m \sigma_i\sigma_j 
\kernel_{ij}^{(\theta)}
\right)
\right]
\end{align*}
The first inequality is due to Jensen's inequality, and
the second inequality is due to the fact that the supremum is
bounded by the sum.
By using the symmetry property of $\kernel^{(\theta)}$, we have
$\sum_{i,j=1}^m\sigma_i\sigma_j\kernel_{ij}^{(\theta)}=\T{\bsigma}\kernel^{(\theta)}\bsigma$,
which is rewritten as
\[
\T{\bsigma}\kernel^{(\theta)}\bsigma = \T{(\T{\Vmat}\bsigma)}
\left( \begin{array}{ccc}
\lambda_1 & \hfill & 0  \\
\hfill & \ddots & \hfill  \\
0 & \hfill & \lambda_m \\
\end{array} \right)
\T{\Vmat}\bsigma,
\]
where
$\lambda_1\geq \dots \geq \lambda_m\geq 0$ are the
eigenvalues of $\kernel^{(\theta)}$ and 
$\Vmat = (\bv_1, \ldots, \bv_m)$
is the orthonormal matrix such that $\bv_i$ is the eigenvector
that corresponds to the eigenvalue $\lambda_i$.
By the reproductive property of Gaussian distribution,
$\T{\Vmat} \bsigma$ obeys the same Gaussian distribution as well.
So,
\begin{align*}
&\sum_{\theta \in \Theta}
\Expo_{\bsigma}\left[
\exp\left(c
\sum_{i,j=1}^m \sigma_i\sigma_j 
\kernel_{ij}^{(\theta)}
\right)
\right] 
=
\sum_{\theta \in \Theta}
\Expo_{\bsigma}\left[
\exp\left(c
\T{\bsigma}\kernel^{(\theta)}\bsigma
\right)
\right] \\
= &
\sum_{\theta \in \Theta}
\Expo_{\bsigma}\left[
\exp\left(c
\sum_{k=1}^m\lambda_k(\T{\bv_k}\bsigma)^2
\right)
\right] 
= 
\sum_{\theta \in \Theta}
\Pi_{k=1}^m
\Expo_{\sigma_k}\left[
\exp\left(c
\lambda_k\sigma_k^2
\right)
\right] ~~(\text{replace}~\bsigma = \T{\bv_k}\bsigma)\\
 = &
 \sum_{\theta \in \Theta}
\Pi_{k=1}^m
 \left(
 \int_{-\infty}^{\infty}
 \exp\left(
 c \lambda_k\sigma^2
 \right)
  \frac{\exp(-\sigma)^2}{\sqrt{2\pi}}d\sigma
\right)
 = 
\sum_{\theta \in \Theta}
\Pi_{k=1}^m
 \left(
  \int_{-\infty}^{\infty}
  \frac{\exp(-(1-c\lambda_k)\sigma^2)}{\sqrt{2\pi}}d\sigma\right).
\end{align*}
Now we replace $\sigma$ by $\sigma' =
\sqrt{1-c\lambda_k}\sigma$. Since
$d\sigma'=\sqrt{1-c\lambda_k}d\sigma$,
we have:
\begin{align*}
\int_{-\infty}^{\infty}
\frac{\exp(-(1-c\lambda_k)\sigma^2)}{\sqrt{2}\pi}d\sigma
= \frac{1}{\sqrt{2\pi}} \int_{-\infty}^{\infty} 
\frac{\exp(-\sigma'^2)}{\sqrt{1-c\lambda_k}}d\sigma'
= \frac{1}{\sqrt{1-c\lambda_k}}.
\end{align*}
 Now, by letting $c=\frac{1}{2 \max_{i, \theta}\lambda_i}=1/(2\lambda_1)$ and
 applying the inequality that $\frac{1}{(1-x)} \leq
 1+2(\sqrt{2}-1)x$ for  $0 \leq x \leq \frac{1}{2}$, 
 the bound becomes
 \begin{align}
 &\exp\left(
c \Expo_{\bsigma}\left[ \sup_{\theta \in \Theta}  
\sum_{i,j=1}^m \sigma_i\sigma_j 
\kernel_{ij}^{(\theta)}
\right]
  \right)
  \leq
\sum_{\theta \in \Theta}
\Pi_{k=1}^m
  \left(
1+2(\sqrt{2}-1)c\lambda_k
  \right)
 \end{align}
because it holds that $\frac{1}{(1-x)} \leq 1+2(\sqrt{2}-1)x$ in $0 \leq x \leq \frac{1}{2}$.
Further, taking logarithm, dividing the both sides by $c$ and applying
 $\ln(1+x) \leq x$, we get:
\begin{align}
\label{eq:gauss2}
\Expo_{\bsigma}\left[ \sup_{\theta \in \Theta}  
\sum_{i,j=1}^m \sigma_i\sigma_j 
\kernel_{ij}^{(\theta)}
\right]
&\leq
 (\sqrt{2}-1)\sum_{k=1}^m \lambda_k  + 2\lambda_1 \ln |\Theta|\\
 &=
 (\sqrt{2}-1)\tr(\kernel) + 2\lambda_1 \ln |\Theta|\\
 &\leq
 (\sqrt{2}-1)m R^2 + 2 mR^2 \ln |\Theta|,
\end{align}
 where the last inequality holds since $\lambda_1=\|\kernel\|_2 \leq m
 \|\kernel\|_{\max} \leq R^2$.
By equation~(\ref{eq:gauss1}) and (\ref{eq:gauss2}), 
we have:
\begin{align}
\GC_S(H) \leq
\frac{\Lambda}{m}
\sqrt{
\Expo_{\bsigma}
\left[\sup_{\theta \in \Theta}  
\sum_{i,j=1}^m \sigma_i\sigma_j 
\kernel_{ij}^{(\theta)}
\right]} 
\leq 
\frac{\Lambda R\sqrt{(\sqrt{2}-1) + 2\ln 
|\Theta|
}}{\sqrt{m}}.
\end{align}
\end{proof}

\subsection{Proof of Lemma 4}
\begin{proof}
 We will reduce the problem of counting intersections of the Voronoi
 diagrams to that of counting possible labelings by hyperplanes for
 some set.
 Note that for each neighboring Voronoi regions, the border is a part of
 hyperplane since the closeness is defined in terms of the inner
 product.
 So, by simply extending each border to a hyperplane, we obtain
 intersections of halfspaces defined by the extended hyperplanes.
 Note that, the size of these intersections gives an upper bound of
 intersections of the Voronoi diagrams.
 More precisely, we draw hyperplanes for each pair of points in
 $\calP_S=\{\Phi(\bx_i^{(j)})\ \mid i \in [m], j \in [Q]\}$ so that each point on
 the hyperplane has the same inner product between two points.
 Note that for each pair $\bz, \bz' \in \calP_S$, the normal vector of
 the hyperplane is given as $\bz -\bz'$ (by fixing the sign arbitrary).
 So, the set of hyperplanes obtained by this procedure is exactly
 $\pdiff$. The size of $\pdiff$ is ${mQ} \choose 2$, which is at most $m^2Q^2$.
 Now, we consider a ``dual'' space by viewing each hyperplane as a point and each point in $U$ as a
 hyperplane.
 Note that points $\bu$ (hyperplanes in the dual) in an intersection give the same labeling on the
 points in the dual domain.
 Therefore, the number of intersections in the original domain is the
 same as the number of the possible labelings on $\pdiff$ by hyperplanes
 in $U$. By the classical Sauer's
 Lemma and the VC dimension of hyperplanes~(see, e.g., Theorem 5.5
 in~\cite{LK02}), the size is at most $O((m^2Q^2)^{d_{\Phi,S}^*})$.
\end{proof}

\subsection{Proof of Theorem 2}
\begin{proof}
 (i) In this case, the Hilbert space $\Hilbert$is contained in
 $\Real^\ell$. Then, by the fact that VC dimension $d_{\Phi,S}^*$ is at
 most $\ell$ and Lemma 4, the statement holds.

 (ii)
 f $\left\langle\Phi(\bz), \Phi\bx \right\rangle$ is monotone
decreasing for $\|\bz-\bx\|$,
then the following holds:
\[
\arg\max_{\bx \in \calX} \left\langle\Phi(\bz), \Phi(\bx) \right\rangle = \arg\min_{\bx \in \calX} \|\bz-\bx\|_2.
\]
Therefore, $\max_{\bu:\|\bu\|_\Hilbert = 1}\langle \bu, \Phi(\bx)
\rangle = \|\Phi(\bx)\|_\Hilbert$, where $\bu =
\frac{\Phi(\bx)}{\|\Phi(\bx)\|_\Hilbert}$. 
It indicates that the number of Voronoi cells
made by $V_i^{(j)}=\{\bz \in \Real^\ell \mid j=\arg\max_{k \in [Q]}
  \bz \cdot \bx_i^{(k)} \}$ corresponds to the
$\hat{V}_i^{(j)}=\{\Phi(\bz) \in \Hilbert \mid j=\arg\max_{k \in [Q]}
 \langle \Phi(\bz), \Phi(\bx_i^{(k)}) \rangle \}$.
 Then, by following the same argument for the linear kernel case, we get
 the same statement.

 (iii)
 We follow the argument in Lemma 4.
 For the set of classifiers $F=\{f:\pdiff \to \{-1,1\} \mid
 f=\sign(\langle \bu, \bz\rangle), \|\bu|_\Hilbert, 
 \min_{\bz in \pdiff}| \langle \bu, \bz\rangle|=\mu \}$,
 its VC dimension is known to be at most $R\Lambda/\mu$
 for $\pdiff \subseteq \{\bz \mid \|\bz\|_\Hilbert \leq R\}$ (see, e.g., \cite{LK02}).
 By the definition of $\mu^*$,
 for each intersections given by hyperplanes, there always exists a point $\bu$
 whose inner product between each hyperplane is at least $\mu^*$.
 Therefore, the size of the intersections is bounded by the number of possible
 labelings in the dual space by $U''=\{\bu \in \Hilbert,
 \|\bu\|_\Hilbert \leq \Lambda,  \min_{\bz in \pdiff}| \langle \bu,
 \bz\rangle|=\mu^* \}\}$.
 Thus we obtain that $d_{\Phi, S}^*$ is at most $R\Lambda /\mu^*$ and by
 Lemma 4, we complete the proof.
\end{proof}

\subsection{Proof of Theorem 3}
 \begin{proof}
We can rewrite the optimization problem (4)
by using $\theta \in \Theta$ defined in Subsection 3.2 as
  follows:
\begin{align}
 \max_{\theta \in \Theta}\max_{\bu \in \Hilbert: \theta_\bu=\theta} \quad& 
\sum_{i=1}^my_id_i
\left\langle \bu,  \Phi\left(\bx_i^{(\theta(i))}\right)  \right\rangle \\
 \nonumber
\text{sub.to} \quad& \|\bu\|_\Hilbert^2 \leq \Lambda^2.
\end{align}
Thus, if we fix $\theta \in \Theta$, we have a sub-problem. Since
  the constraint $\theta=\theta_\bu$ can be written as linear
  constraints, each sub-problem is equivalent to a convex optimization.  
Indeed, each sub-problem can be written as the equivalent unconstrained
  minimization (by neglecting constants in the objective)
\[
\min_{\bu \in \Hilbert} \beta \|\bu\|^2_\Hilbert -\sum_{i=1}^m\sum_{j=1}^Q
 \lambda_{i,j} \left\langle \bu, \Phi\left(\bx_i^{(\theta(i))}\right)  \right\rangle
  -\sum_{i=1}^my_id_i
 \left\langle \bu,  \Phi\left(\bx_i^{(\theta(i))}\right)  \right\rangle,
\]
 where $\beta$ and $\lambda_{i,j}$ $(i \in [m], j \in [Q])$ are the corresponding positive constants.
Now for each sub-problem, we can apply the standard Representer Theorem
  argument (see, e.g., \cite{Mohri.et.al_FML}).
  Let $\Hilbert_1$ be the subspace $\{\bu \in \Hilbert \mid
 \bu=\sum_{i=1}^m\sum_{j=1}^Q \alpha_{ij}\Phi(x_i^{(j)}), \alpha_{ij}\in \Real\}$.
 We denote $\bu_1$ as the orthogonal projection of $\bu$ onto
 $\Hilbert_1$ and any $\bu\in \Hilbert$ has the decomposition
 $\bu=\bu_1 + \bu^{\perp}$. Since $\bu^{\perp}$ is orthogonal
 w.r.t. $\Hilbert_1$, $\|\bu\|_\Hilbert^2=\|\bu_1\|_\Hilbert^2 +
 \|\bu^{\perp}\|_\Hilbert^2 \geq \|\bu_1\|_\Hilbert^2$.
On the other hand, $\left\langle \bu,  \Phi\left(\bx_i^{(j)}\right)
 \right\rangle =\left\langle \bu_1,  \Phi\left(\bx_i^{(j)}\right)  \right\rangle$.
 Therefore, the optimal solution of each sub-problem has to be contained
  in $\Hilbert_1$.
  This implies that the optimal solution, which is the maximum over all
  solutions of sub-problems, is contained in $\Hilbert_1$ as well.
 \end{proof}

\end{document}